\documentclass[twoside,11pt]{article}
\usepackage[preprint]{myjmlr2e}
\pdfoutput=1

\ShortHeadings{Deep Networks Are Kernel Machines}{Domingos}
\firstpageno{1}

\begin{document}

\thispagestyle{empty}

\title{Every Model Learned by Gradient Descent Is Approximately a Kernel Machine}

\author{\name Pedro Domingos \email pedrod@cs.washington.edu \\
\addr Paul G. Allen School of Computer Science \& Engineering \\
University of Washington \\
Seattle, WA 98195-2350, USA }

\editor{}

\maketitle

\begin{abstract}
Deep learning's successes are often attributed to its ability to automatically discover new representations of the data, rather than relying on handcrafted features like other learning methods. We show, however, that deep networks learned by the standard gradient descent algorithm are in fact mathematically approximately equivalent to kernel machines, a learning method that simply memorizes the data and uses it directly for prediction via a similarity function (the kernel). This greatly enhances the interpretability of deep network weights, by elucidating that they are effectively a superposition of the training examples. The network architecture incorporates knowledge of the target function into the kernel. This improved understanding should lead to better learning algorithms.
\end{abstract}

\begin{keywords}
gradient descent, kernel machines, deep learning, representation learning, neural tangent kernel
\end{keywords}

\section{Introduction}

Despite its many successes, deep learning remains poorly understood \citep{goodfellow16}. In contrast, kernel machines are based on a well-developed mathematical theory, but their empirical performance generally lags behind that of deep networks \citep{scholkopf02}. The standard algorithm for learning deep networks, and many other models, is gradient descent \citep{rumelhart86}. Here we show that every model learned by this method, regardless of architecture, is approximately equivalent to a kernel machine with a particular type of kernel. This kernel measures the similarity of the model at two data points in the neighborhood of the path taken by the model parameters during learning. Kernel machines store a subset of the training data points and match them to the query using the kernel. Deep network weights can thus be seen as a superposition of the training data points in the kernel's feature space, enabling their efficient storage and matching. This contrasts with the standard view of deep learning as a method for discovering representations from data, with the attendant lack of interpretability \citep{bengio13}. Our result also has significant implications for boosting algorithms \citep{freund97}, probabilistic graphical models \citep{koller09}, and convex optimization \citep{boyd04}.

\section{Path Kernels}

A kernel machine is a model of the form
\[ y = g \left(\sum_i a_i K(x,x_i) + b\right), \]
where $x$ is the query data point, the sum is over training data points $x_i$, $g$ is an optional nonlinearity, the $a_i$'s and $b$ are learned parameters, and the kernel $K$ measures the similarity of its arguments \citep{scholkopf02}. In supervised learning, $a_i$ is typically a linear function of $y_i^*$, the known output for $x_i$. Kernels may be predefined or learned \citep{cortes09}. Kernel machines, also known as support vector machines, are one of the most developed and widely used machine learning methods. In the last decade, however, they have been eclipsed by deep networks, also known as neural networks and multilayer perceptrons, which are composed of multiple layers of nonlinear functions. Kernel machines can be viewed as neural networks with one hidden layer, with the kernel as the nonlinearity. For example, a Gaussian kernel machine is a radial basis function network \citep{poggio90}. But a deep network would seem to be irreducible to a kernel machine, since it can represent some functions exponentially more compactly than a shallow one \citep{delalleau11,cohen16}. 

Whether a representable function is actually learned, however, depends on the learning algorithm. Most deep networks, and indeed most machine learning models, are trained using variants of gradient descent \citep{rumelhart86}. Given an initial parameter vector $w_0$ and a loss function $L = \sum_i L(y_i^*,y_i)$, gradient descent repeatedly modifies the model's parameters $w$ by subtracting the loss's gradient from them, scaled by the learning rate $\epsilon$:
\[ w_{s+1} = w_{s} - \epsilon \nabla_w L(w_s). \]
The process terminates when the gradient is zero and the loss is therefore at an optimum (or saddle point). Remarkably, we have found that learning by gradient descent is a strong enough constraint that the end result is guaranteed to be approximately a kernel machine, regardless of the number of layers or other architectural features of the model.

Specifically, the kernel machines that result from gradient descent use what we term a {\em path kernel}. If we take the learning rate to be infinitesimally small, the path kernel between two data points is simply the integral of the dot product of the model's gradients at the two points over the path taken by the parameters during gradient descent:
\[ K(x,x') = \int_{c(t)} \nabla_w y(x) \cdot \nabla_w y(x') \: dt, \]
where $c(t)$ is the path. Intuitively, the path kernel measures how similarly the model at the two data points varies during learning. The more similar the variation for $x$ and $x'$, the higher the weight of $x'$ in predicting $y$. Fig.~\ref{pkf} illustrates this graphically.

\begin{figure}[p]
\begin{center}
\includegraphics*[width=5in]{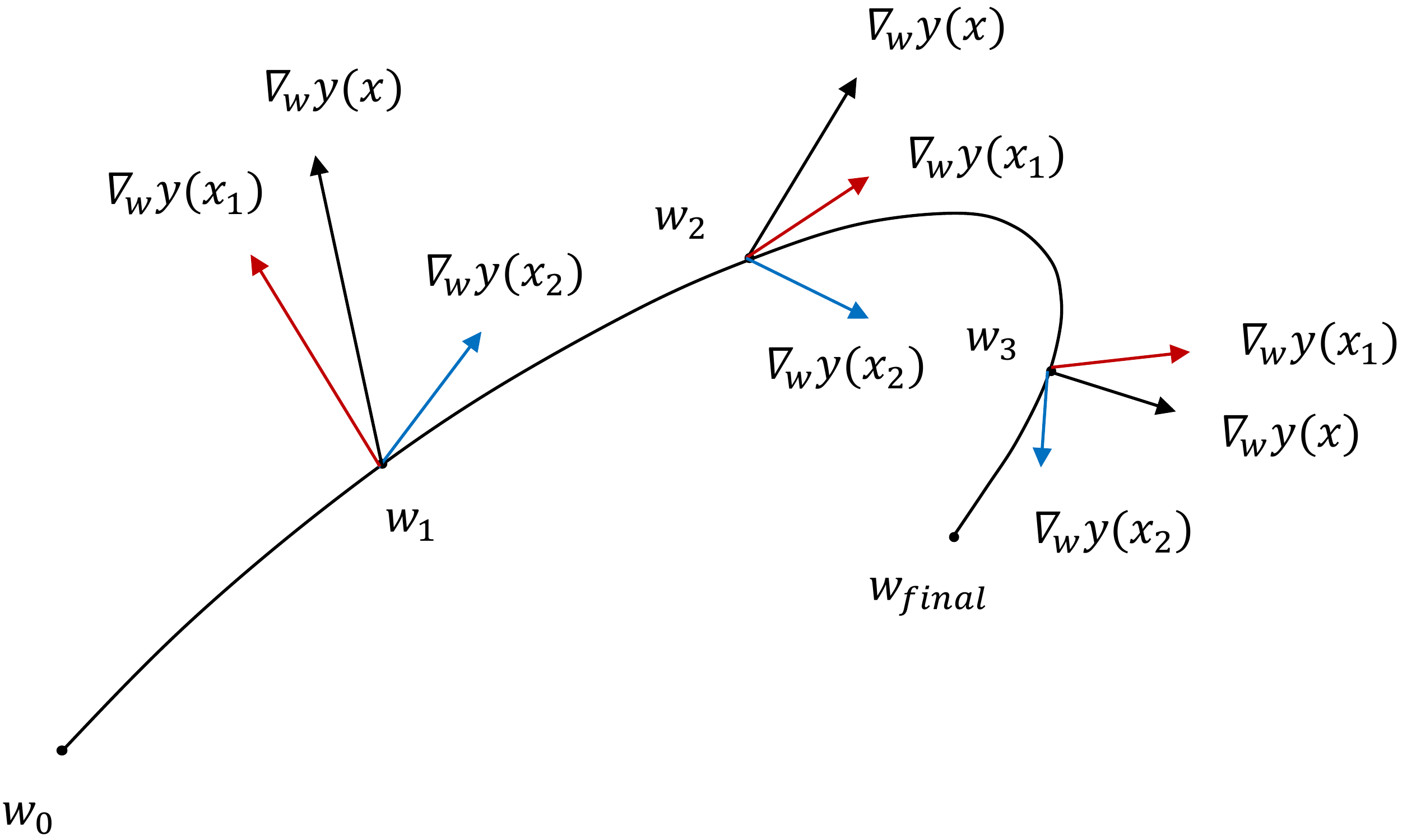}
\vspace{0.4in}
\caption{How the path kernel measures similarity between examples. In this two-dimensional illustration, as the weights follow a path on the plane during training, the model's gradients (vectors on the weight plane) for $x$, $x_1$ and $x_2$ vary along it. The kernel $K(x,x_1)$ is the integral of the dot product of the gradients $\nabla_w y(x)$ and $\nabla_w y(x_1)$ over the path, and similarly for $K(x,x_2)$. Because on average over the weight path $\nabla_w y(x) \cdot \nabla_w y(x_1)$ is greater than $\nabla_w y(x) \cdot \nabla_w y(x_2)$, $y_1$ has more influence than $y_2$ in predicting $y$, all else being equal.
\label{pkf}}
\end{center}
\end{figure}

Our result builds on the concept of neural tangent kernel, recently introduced to analyze the behavior of deep networks \citep{jacot18}. The neural tangent kernel is the integrand of the path kernel when the model is a multilayer perceptron. Because of this, and since a sum of positive definite kernels is also a positive definite kernel \citep{scholkopf02}, the known conditions for positive definiteness of neural tangent kernels extend to path kernels \citep{jacot18}. A positive definite kernel is equivalent to a dot product in a derived feature space, which greatly simplifies its analysis \citep{scholkopf02}.

We now present our main result. For simplicity, in the derivations below we assume that $y$ is a (real-valued) scalar, but it can be made a vector with only minor changes. The data points $x_i$ can be arbitrary structures.

\begin{definition}
\hfill The {\em tangent kernel} associated with function $f_w(x)$ and parameter vector $v$ is \\ $K^g_{f,v}(x,x') = \nabla_w f_w(x) \cdot \nabla_w f_w(x')$, with the gradients taken at $v$.
\label{gkd}
\end{definition}

\begin{definition}
The {\em path kernel} associated with function $f_w(x)$ and curve $c(t)$ in parameter space is $K^p_{f,c}(x,x') = {\displaystyle \int_{c(t)}} K^g_{f,w(t)}(x,x') \: dt$.
\label{pkd}
\end{definition}

\begin{theorem}
Suppose the model $y = f_w(x)$, with $f$ a differentiable function of $w$, is learned from a training set $\{ (x_i, y_i^*) \}_{i=1}^m$ by gradient descent with differentiable loss function $L = \sum_i L(y_i^*,y_i)$ and learning rate $\epsilon$. Then
\[ \lim_{\epsilon \rightarrow 0} y = \sum_{i=1}^m a_i K(x,x_i) + b, \]
where $K(x,x_i)$ is the path kernel associated with $f_w(x)$ and the path taken by the parameters during gradient descent, $a_i$ is the average $- \partial L / \partial y_i$ along the path weighted by the corresponding tangent kernel, and $b$ is the initial model.
\label{pkt}
\end{theorem}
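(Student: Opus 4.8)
The plan is to reduce the discrete gradient-descent recursion to a continuous gradient flow and then track how the scalar output $y = f_w(x)$ evolves along that flow. First I would show that, writing $t = s\epsilon$, the iterates $w_s$ converge as $\epsilon \to 0$ to the solution $w(t)$ of the gradient-flow ODE $\dot w(t) = -\nabla_w L(w(t))$ with $w(0) = w_0$; this is the standard statement that the explicit Euler method recovers the flow, and the per-step linearization error of a differentiable $f$ and $L$ is $o(\epsilon)$, so it vanishes in the limit. Let $c$ denote the resulting curve in parameter space, traversed from the start of training to its termination.

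Next I would differentiate $y(t) = f_{w(t)}(x)$ along the flow. By the chain rule, $\dot y(t) = \nabla_w f_{w(t)}(x) \cdot \dot w(t) = -\nabla_w f_{w(t)}(x) \cdot \nabla_w L(w(t))$, and expanding the loss gradient, again by the chain rule, $\nabla_w L = \sum_{i=1}^m (\partial L / \partial y_i)\, \nabla_w f_w(x_i)$ with $y_i = f_w(x_i)$. Substituting and invoking Definition~\ref{gkd} gives
\[ \dot y(t) = -\sum_{i=1}^m \frac{\partial L}{\partial y_i}\, K^g_{f,w(t)}(x,x_i), \]
with $\partial L/\partial y_i$ evaluated along the path. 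Integrating from the start of training to its termination (or to $+\infty$ if it does not terminate) and interchanging the finite sum with the integral yields
\[ y(x) = f_{w_0}(x) - \sum_{i=1}^m \int_{c(t)} \frac{\partial L}{\partial y_i}\, K^g_{f,w(t)}(x,x_i)\, dt. \]

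Finally I would cast this in kernel-machine form. Set $b = f_{w_0}(x)$, the initial model, and for each $i$ with $\int_{c(t)} K^g_{f,w(t)}(x,x_i)\,dt \neq 0$ put
\[ a_i = \frac{-\int_{c(t)} (\partial L/\partial y_i)\, K^g_{f,w(t)}(x,x_i)\, dt}{\int_{c(t)} K^g_{f,w(t)}(x,x_i)\, dt}, \]
which is precisely the average of $-\partial L/\partial y_i$ along the path weighted by the tangent kernel; the $i$-th summand then equals $a_i K^p_{f,c}(x,x_i)$ by Definition~\ref{pkd}, and the degenerate case where the denominator vanishes contributes zero (set $a_i = 0$). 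This gives $y = \sum_i a_i K(x,x_i) + b$ in the limit, as claimed.

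The main obstacle is the first step: making the passage from the discrete iteration to the gradient-flow ODE rigorous and strong enough to commute with the integral defining the path kernel. One has to control the accumulation of the $o(\epsilon)$ per-step errors over a number of steps that itself grows like $1/\epsilon$ — so $C^2$ smoothness (Lipschitz gradients), giving $O(\epsilon^2)$ per step and $O(\epsilon)$ total, is the clean hypothesis — and one must handle non-terminating descent, where convergence of $y$ requires the improper integral to exist, e.g. because the gradients decay along the path. A secondary point worth stating explicitly is that $a_i$ as defined still depends on the query $x$ through the weighting kernel, so the object produced is a kernel machine only in an extended sense; this is presumably the force of the word \emph{approximately} in the statement.
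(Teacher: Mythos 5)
Your proposal is correct and follows essentially the same route as the paper's proof: pass to the gradient flow, differentiate $y$ along it via the chain rule and the additivity of the loss to obtain $\dot y = -\sum_i (\partial L/\partial y_i)\,K^g_{f,w(t)}(x,x_i)$, integrate, and normalize by $\int_{c(t)} K^g_{f,w(t)}(x,x_i)\,dt$ to express the result as $\sum_i a_i K^p_{f,c}(x,x_i) + y_0$. The extra care you take about Euler-discretization error, the vanishing-denominator case, and the $x$-dependence of the $a_i$ (which the paper relegates to a remark) is welcome but does not change the argument.
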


\begin{proof}
In the $\epsilon \rightarrow 0$ limit, the gradient descent equation, which can also be written as
\[ \frac{w_{s+1} - w_s}{\epsilon} = - \nabla_w L(w_s), \]
where $L$ is the loss function, becomes the differential equation
\[ \frac{dw(t)}{dt} = - \nabla_w L(w(t)). \]
(This is known as a gradient flow \citep{ambrosio08}.) Then for any differentiable function of the weights $y$,
\[ \frac{dy}{dt} = \sum_{j=1}^d \frac{\partial y}{\partial w_j} \frac{dw_j}{dt}, \]
where $d$ is the number of parameters. Replacing $dw_j/dt$ by its gradient descent expression:
\[ \frac{dy}{dt} = \sum_{j=1}^d \frac{\partial y}{\partial w_j} \left(- \frac{\partial L}{\partial w_j} \right). \]
Applying the additivity of the loss and the chain rule of differentiation:
\[ \frac{dy}{dt} = \sum_{j=1}^d \frac{\partial y}{\partial w_j} \left(- \sum_{i=1}^m \frac{\partial L}{\partial y_i} \frac{\partial y_i}{\partial w_j} \right). \]
Rearranging terms:
\[ \frac{dy}{dt} = - \sum_{i=1}^m \frac{\partial L}{\partial y_i} \sum_{j=1}^d \frac{\partial y}{\partial w_j} \frac{\partial y_i}{\partial w_j}. \]
Let $L'(y_i^*,y_i) = \partial L / \partial y_i$, the loss derivative for the $i$th output. Applying this and Definition~\ref{gkd}:
\[ \frac{dy}{dt} = - \sum_{i=1}^m L'(y_i^*,y_i) K^g_{f,w(t)}(x,x_i). \]
Let $y_0$ be the initial model, prior to gradient descent. Then for the final model $y$:
\[ \lim_{\epsilon \rightarrow 0} y = y_0 - \int_{c(t)} \sum_{i=1}^m L'(y_i^*,y_i) K^g_{f,w(t)}(x,x_i) \: dt, \]
where $c(t)$ is the path taken by the parameters during gradient descent. Multiplying and dividing by $\int_{c(t)} K^g_{f,w(t)}(x,x_i) \: dt$:
\[ \lim_{\epsilon \rightarrow 0} y =  y_0 - \sum_{i=1}^m \left( \frac{\int_{c(t)} K^g_{f,w(t)}(x,x_i) L'(y_i^*,y_i) \: dt}{\int_{c(t)} K^g_{f,w(t)}(x,x_i) \: dt} \right) \int_{c(t)} K^g_{f,w(t)}(x,x_i) \: dt. \]
Let $\overline{L'}(y_i^*,y_i) = \int_{c(t)} K^g_{f,w(t)}(x,x_i) L'(y_i^*,y_i) \: dt / \int_{c(t)} K^g_{f,w(t)}(x,x_i) \: dt$, the average loss derivative weighted by similarity to $x$. Applying this and Definition~\ref{pkd}:
\[ \lim_{\epsilon \rightarrow 0} y = y_0 - \sum_{i=1}^m \overline{L'}(y_i^*,y_i) K^p_{f,c}(x,x_i). \]
Thus
\[ \lim_{\epsilon \rightarrow 0} y = \sum_{i=1}^m a_i K(x,x_i) + b, \]
with $K(x,x_i) = K^p_{f,c}(x,x_i)$, $a_i = - \overline{L'}(y_i^*,y_i)$, and $b = y_0$.
\end{proof}

\begin{remark}
This differs from typical kernel machines in that the $a_i$'s and $b$ depend on $x$. Nevertheless, the $a_i$'s play a role similar to the example weights in ordinary SVMs and the perceptron algorithm: examples that the loss is more sensitive to during learning have a higher weight. $b$ is simply the prior model, and the final model is thus the sum of the prior model and the model learned by gradient descent, with the query point entering the latter only through kernels. Since Theorem~\ref{pkt} applies to every $y_i$ as a query throughout gradient descent, the training data points also enter the model only through kernels (initial model aside).
\end{remark}

\begin{remark}
Theorem~\ref{pkt} can equally well be proved using the loss-weighted path kernel $K^{lp}_{f,c,L} = \int_{c(t)} L'(y_i^*,y_i) K^g_{f,w(t)}(x,x_i) \: dt$, in which case $a_i = -1$ for all $i$.
\end{remark}

\begin{remark}
In least-squares regression, $L'(y_i^*,y_i) = y_i - y_i^*$. When learning a classifier by minimizing cross-entropy, the standard practice in deep learning, the function to be estimated is the conditional probability of the class, $p_i$, the loss is $- \sum_{i=1}^m \ln p_i$, and the loss derivative for the $i$th output is $- 1 / p_i$. Similar expressions hold for modeling a joint distribution by minimizing negative log likelihood, with $p_i$ as the probability of the data point.
\end{remark}

\begin{remark}
\hfill Adding \hfill a \hfill regularization \hfill term \hfill $R(w)$ \hfill to \hfill the \hfill loss \hfill function \hfill simply \hfill adds \\ $- \int_{c(t)} \sum_{j=1}^d (\partial y / \partial w_j) (\partial R / \partial w_j)$ to $b$.
\end{remark}

\begin{remark}
The proof above is for batch gradient descent, which uses all training data points at each step. To extend it to stochastic gradient descent, which uses a subsample, it suffices to multiply each term in the summation over data points by an indicator function $I_i(t)$ that is 1 if the $i$th data point is included in the subsample at time $t$ and 0 otherwise. The only change this causes in the result is that the path kernel and average loss derivative for a data point are now stochastic integrals. Based on previous results \citep{scieur17}, Theorem~\ref{pkt} or a similar result seems likely to also apply to further variants of gradient descent, but proving this remains an open problem.

\end{remark}

For linear models, the path kernel reduces to the dot product of the data points. It is well known that a single-layer perceptron is a kernel machine, with the dot product as the kernel \citep{aizerman64}. Our result can be viewed as a generalization of this to multilayer perceptrons and other models. It is also related to Lippmann {\em et al.}'s proof that Hopfield networks, a predecessor of many current deep architectures, are equivalent to the nearest-neighbor algorithm, a predecessor of kernel machines, with Hamming distance as the comparison function \citep{lippmann87}.

The result assumes that the learning rate is sufficiently small for the trajectory of the weights during gradient descent to be well approximated by a smooth curve. This is standard in the analysis of gradient descent, and is also generally a good approximation in practice, since the learning rate has to be quite small in order to avoid divergence (e.g., $\epsilon = 10^{-3}$) \citep{goodfellow16}. Nevertheless, it remains an open question to what extent models learned by gradient descent can still be approximated by kernel machines outside of this regime.

\section{Discussion}

A notable disadvantage of deep networks is their lack of interpretability \citep{zhang18}. Knowing that they are effectively path kernel machines greatly ameliorates this. In particular, the weights of a deep network have a straightforward interpretation as a superposition of the training examples in gradient space, where each example is represented by the corresponding gradient of the model. Fig.~\ref{wsf} illustrates this. One well-studied approach to interpreting the output of deep networks involves looking for training instances that are close to the query in Euclidean or some other simple space \citep{ribeiro16}. Path kernels tell us what the exact space for these comparisons should be, and how it relates to the model's predictions.

\begin{figure}[p]
\begin{center}
\includegraphics[width=5in]{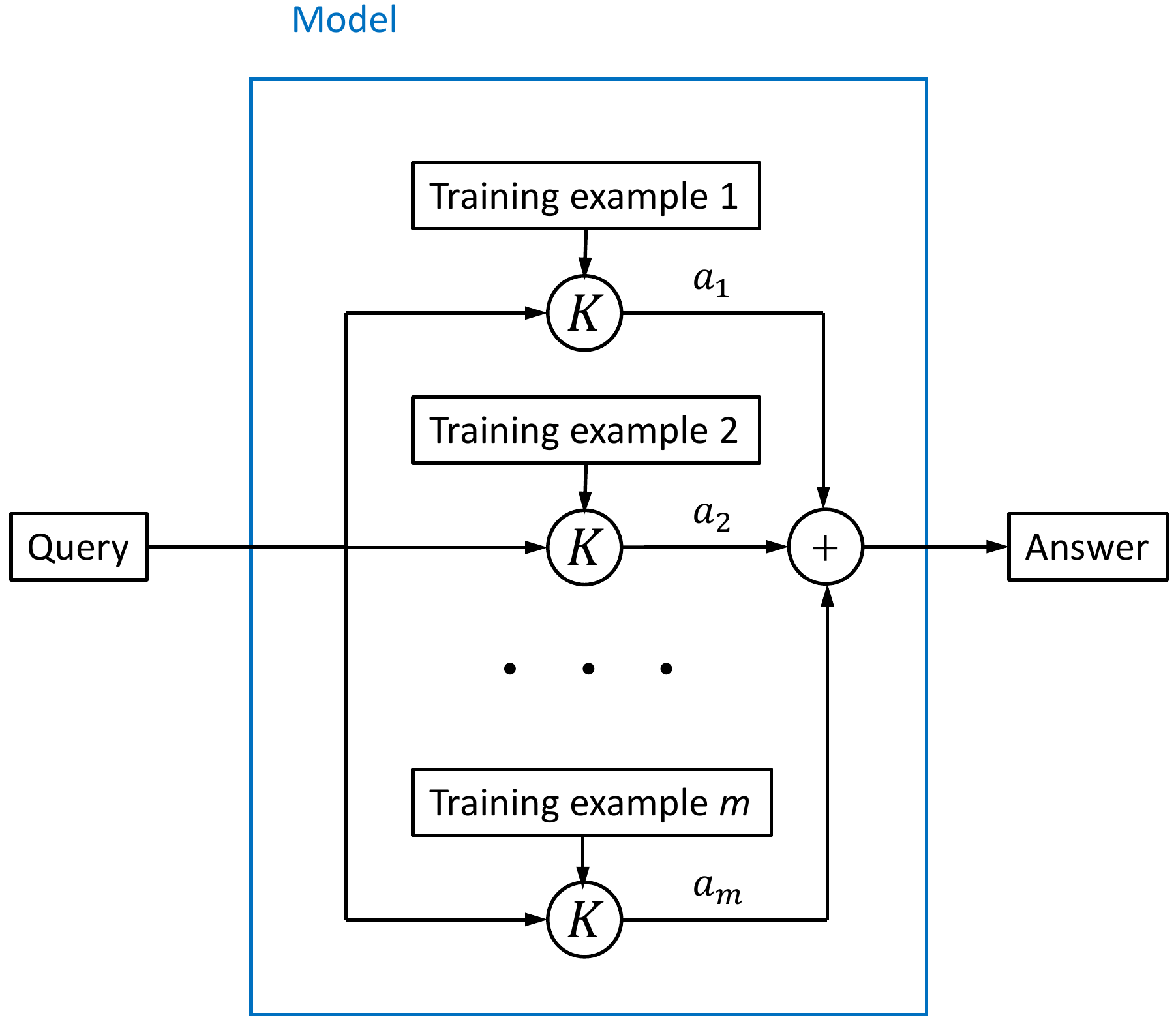}
\vspace{0.4in}
\caption{Deep network weights as superpositions of training examples. Applying the learned model to a query example is equivalent to simultaneously matching the query with each stored example using the path kernel and outputting a weighted sum of the results.
\label{wsf}}
\end{center}
\end{figure}

Experimentally, deep networks and kernel machines often perform more similarly than would be expected based on their mathematical formulation \citep{brendel19}. Even when they generalize well, deep networks often appear to memorize and replay whole training instances \citep{zhang17,devlin15}. The fact that deep networks are in fact kernel machines helps explain both of these observations. It also sheds light on the surprising brittleness of deep models, whose performance can degrade rapidly as the query point moves away from the nearest training instance \citep{szegedy14}, since this is what is expected of kernel estimators in high-dimensional spaces \citep{hardle04}.

Perhaps the most significant implication of our result for deep learning is that it casts doubt on the common view that it works by automatically discovering new representations of the data, in contrast with other machine learning methods, which rely on predefined features \citep{bengio13}. As it turns out, deep learning also relies on such features, namely the gradients of a predefined function, and uses them for prediction via dot products in feature space, like other kernel machines. All that gradient descent does is select features from this space for use in the kernel. If gradient descent is limited in its ability to learn representations, better methods for this purpose are a key research direction. Current nonlinear alternatives include predicate invention \citep{muggleton88} and latent variable discovery in graphical models \citep{elidan00}. Techniques like structure mapping \citep{gentner83}, crossover \citep{holland75} and predictive coding \citep{rao99} may also be relevant. Ultimately, however, we may need entirely new approaches to solve this crucial but extremely difficult problem.

Our result also has significant consequences on the kernel machine side. Path kernels provide a new and very flexible way to incorporate knowledge of the target function into the kernel. Previously, it was only possible to do so in a weak sense, via generic notions of what makes two data points similar. The extensive knowledge that has been encoded into deep architectures by applied researchers, and is crucial to the success of deep learning, can now be ported directly to kernel machines. For example, kernels with translation invariance or selective attention are directly obtainable from the architecture of, respectively, convolutional neural networks \citep{lecun98} or transformers \citep{vaswani17}.

A key property of path kernels is that they combat the curse of dimensionality by incorporating derivatives into the kernel: two data points are similar if the candidate function's derivatives at them are similar, rather than if they are close in the input space. This can greatly improve kernel machines' ability to approximate highly variable functions \citep{bengio05}. It also means that points that are far in Euclidean space can be close in gradient space, potentially improving the ability to model complex functions. (For example, the maxima of a sine wave are all close in gradient space, even though they can be arbitrarily far apart in the input space.)

Most significantly, however, learning path kernel machines via gradient descent largely overcomes the scalability bottlenecks that have long limited the applicability of kernel methods to large data sets. Computing and storing the Gram matrix at learning time, with its quadratic cost in the number of examples, is no longer required. (The Gram matrix is the matrix of applications of the kernel to all pairs of training examples.) Separately storing and matching (a subset of) the training examples at query time is also no longer necessary, since they are effectively all stored and matched simultaneously via their superposition in the model parameters. The storage space and matching time are independent of the number of examples. (Interestingly, superposition has been hypothesized to play a key role in combatting the combinatorial explosion in visual cognition \citep{arathorn02}, and is also essential to the efficiency of quantum computing \citep{nielsen00} and radio communication \citep{carlson09}.) Further, the same specialized hardware that has given deep learning a decisive edge in scaling up to large data \citep{raina09} can now be used for kernel machines as well.

The significance of our result extends beyond deep networks and kernel machines. In its light, gradient descent can be viewed as a boosting algorithm, with tangent kernel machines as the weak learner and path kernel machines as the strong learner obtained by boosting it \citep{freund97}. In each round of boosting, the examples are weighted by the corresponding loss derivatives. It is easily seen that each round (gradient descent step) decreases the loss, as required.  The weight of the model at a given round is the learning rate for that step, which can be constant or the result of a line search \citep{boyd04}. In the latter case gradient descent is similar to gradient boosting \citep{mason99}.

Another consequence of our result is that every probabilistic model learned by gradient descent, including Bayesian networks \citep{koller09}, is a form of kernel density estimation \citep{parzen62}. The result also implies that the solution of every convex learning problem is a kernel machine, irrespective of the optimization method used, since, being unique, it is necessarily the solution obtained by gradient descent. It is an open question whether the result can be extended to nonconvex models learned by non-gradient-based techniques, including constrained \citep{bertsekas82} and combinatorial optimization \citep{papadimitriou82}.

The results in this paper suggest a number of research directions. For example, viewing gradient descent as a method for learning path kernel machines may provide new paths for improving it. Conversely, gradient descent is not necessarily the only way to form superpositions of examples that are useful for prediction. The key question is how to optimize the tradeoff between accurately capturing the target function and minimizing the computational cost of storing and matching the examples in the superposition.

\acks{This research was partly funded by ONR grant N00014-18-1-2826. Thanks to L\'eon Bottou and Simon Du for feedback on a draft of this paper.}

\vskip 0.2in
\bibliography{refs}

\end{document}